\definecolor{high}{rgb}{0.7,.85,.85}
\definecolor{lightgray}{gray}{0.9}
\DeclareMathOperator*{\argmax}{argmax}
\DeclarePairedDelimiter{\norm}{\lVert}{\rVert}
\NewDocumentCommand{\normL}{ s O{} m }{%
  \IfBooleanTF{#1}{\norm*{#3}}{\norm[#2]{#3}}_{L_2(\Omega)}%
}
\definecolor{ourgreen}{rgb}{0.0,0.5,0.0}
\newcommand{\comment}[1]{}
\newcommand{\cmark}{\ding{51}}%
\newcommand{\xmark}{\ding{55}}%
\newtheorem{lemma}{Lemma}
\theoremstyle{empty}
\newtheorem{duplicate}{Lemma}
\begin{document}
\title{Domain Adaptation for Named Entity Recognition in Online Media with Word Embeddings}
\author{Vivek Kulkarni\thanks{This work was done when the author was a research intern at Yahoo. }
\and
Yashar Mehdad 
\and
Troy Chevalier \\
Yahoo Research\\
Sunnyvale, USA\\
\{vvkulkarni@cs.stonybrook.edu, ymehdad@yahoo-inc.com, troyc@yahoo-inc.com\}
}
\maketitle

\begin{abstract}
Content on the Internet is heterogeneous and arises from various domains like News, Entertainment, Finance and Technology.   
Understanding such content requires identifying named entities (persons, places and organizations) as one of the key steps.
Traditionally Named Entity Recognition (NER) systems have been built using available annotated datasets (like CoNLL, MUC) and demonstrate excellent performance. 
However, these models fail to generalize onto other domains like Sports and Finance where conventions and language use can differ significantly. 
Furthermore, several domains do not have large amounts of annotated labeled data for training robust Named Entity Recognition models.  A key step towards this challenge is to adapt models learned on domains where large amounts of annotated training data  are available to domains with scarce annotated data.

In this paper, we propose methods to effectively adapt models learned on one domain onto other domains using distributed word representations.  
First we analyze the linguistic variation present across domains to identify key linguistic insights that can boost performance across domains. We propose methods to capture domain specific semantics of word usage in addition to global semantics. 
We then demonstrate how to effectively use such domain specific knowledge to learn NER models that outperform previous baselines in the domain adaptation setting. 
\end{abstract}
\section{Introduction}
\label{sec:Introduction}
Named Entity Recognition (NER) is a critical task for understanding textual content. 
While most NER systems demonstrate very good performance, this performance is typically measured on test data drawn from the same domain as the training data.

\begin{figure}[ht!]
	\centering
	\includegraphics[trim = 1in 0.7in 1in 0.7in, clip, width=0.9\columnwidth]{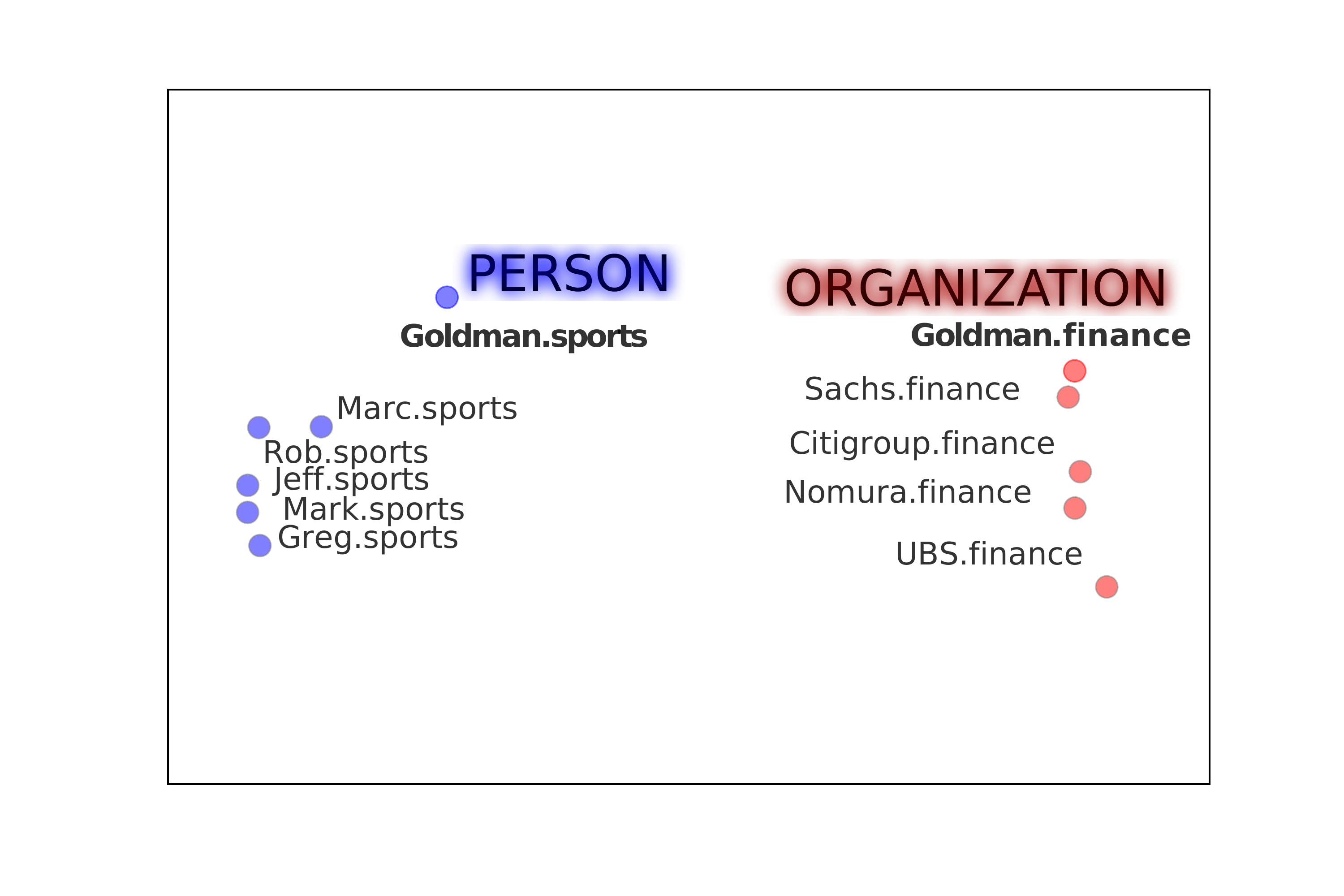}
	\vspace{-0.05in}
	\caption{A 2-D projection of the semantic space learned using \textsc{DomainDist} capturing domain specific differences in the usage of the word \texttt{Goldman} between Sports and Finance. Note how \texttt{Goldman} is close to other banks in Finance domain, but close to other person names in Sports. Capturing such domain specific differences explicitly can allow a model to more effectively infer that \texttt{Goldman} is an \emph{Organization} in Finance but a \emph{Person} in Sports.}
	\label{fig:cw}
\end{figure}
 
For example, most competitive Named Entity Recognition systems are trained on large amounts of labeled data from a given domain (like CoNLL or MUC) and evaluated on a held out test set drawn from the same domain \cite{florian2003named,chieu2002named,ando2005framework,senna,huang2015}. 
While such systems demonstrate high performance in-domain, content on the Internet can originate from multiple domains like Finance and Sports over which these systems perform quite poorly.
Moreover one typically does not have access to large amounts of labeled examples on these domains to train robust domain specific models.  
This challenge is typically addressed through domain adaptation techniques \cite{blitzer2006domain,jiang2007instance,satpal2007domain,jiang2008domain,li2012literature}. 
Most existing work on domain adaptation like Feature Sub-setting \cite{satpal2007domain}, Structural Correspondence Learning \cite{blitzer2006domain,chen2012marginalized}, learn a subset of features or learn dense representations of features that are more suited for domain adaptation. 
Different from these works,  we explore word embeddings that \emph{explicitly} capture domain specific differences while still capturing shared semantics across domains, and show that our proposed methods outperform several competitive baselines on domain adaptation for NER. 

With recent advances in representation learning, word embeddings have been shown to be very useful features for several NLP tasks like POS Tagging, NER, and Sentiment Analysis \cite{chen2013expressive,polyglot-ner,senna}.  
One drawback of using generic word embeddings is that these word vectors do not capture domain specific differences in word semantics and usage. 
To illustrate this, consider articles from two distinct domains: (a) Sports and (b) Finance.  
The word \texttt{tackle} in the Sports domain is generally associated with moves in \emph{football} and used as \emph{``A defensive tackle''}. However in the domain of Finance,  \texttt{tackle} is used to indicate problem solving as in \emph{``The company needs to tackle the rising costs immediately''}.

Explicitly modeling such domain specific differences allows us to capture linguistic variation between domains that serve as distinctive features to boost performance of a machine learning model on NLP tasks.  
In this work we propose methods to effectively model such domain specific differences of language. 
We then apply our methods to analyze domain specific differences in word semantics.  
Finally, we demonstrate the effectiveness of using domain specific word embeddings for the task of Named Entity Recognition in the domain adaptation setting.  Figure \ref{fig:cw} shows the domain specific differences captured by our method across two domains (a) Sports and (b) Finance. 
Observe how the domain specific embeddings that our method learns can easily capture the distinct usages of a word (in this case as a Person or an Organization). As we will show in Section \ref{sec:results} such distinctive representations can improve performance of Named Entity Recognition in different domains outperforming competitive baselines.

In a nut shell, our contributions are as follows:
\begin{itemize}
\item \textbf{Linguistic Variation across Domains}: Given a word $w$ how does its usage differ across different domains? We analyze variation in word usage (semantics) across different domains like Finance and Sports using distributed word representations (Section \ref{sec:domainvariation}). 
\item \textbf{NER systems for Sports and Finance}: We propose methods to effectively use such domain specific knowledge captured by word embeddings towards the task of Named Entity Recognition. In particular we show how to build state of the art NER systems for domains with scarce amount of annotated training data by adapting NER models learned primarily on domains with large amounts of annotated training data (Section \ref{sec:ner}). 
\end{itemize}

\section{Methods}
In this section we propose (a) Two methods to model domain specific word semantics in order to explicitly capture linguistic differences between domains 
and (b) Two methods that use domain specific word embeddings to learn robust Named Entity Recognition models for different domains using domain adaptation.
\label{sec:methods}
\vspace{-0.1in}
\subsection{Domain Specific  Linguistic Variation}
\label{sec:domainvariation}
\subsubsection{\textsc{DomainDist}}
Given a corpus $\mathcal{C}$ with $K$  domains and vocabulary $\mathcal{V}$, we seek to learn a domain specific word embedding $\phi_{k}:\mathcal{V} \mapsto \mathbb{R}^d$ using a neural language model where $k\in{\{1\cdots{K}\}}$.
We apply the method discussed in \cite{bamman2014distributed,kulkarni2015freshman} to learn domain specific word embeddings. 
\footnote{
In Section \ref{sec:domaindisamb} we differentiate ourselves from \cite{bamman2014distributed,kulkarni2015freshman} by outlining a probabilistic method that uses this model to disambiguate the domain given a phrase that outlines the usage of a word $w$.}  
We briefly describe this approach below as pertaining to learning domain specific embeddings.
For each word $w\in\mathcal{V}$ the model learns (1) A global embedding $\delta_{\text{MAIN}}(w)$ for the word ignoring all domain specific cues and (2) A differential embedding $\delta_{k}(w)$ that encodes deviations from the global embedding for $w$ specific to domain $k$.
The domain specific embedding $\phi_{k}(w)$ is computed as: $\phi_{k}(w)=\delta_{\text{MAIN}}(w) + \delta_{k}(w)$. 
The global word embeddings are randomly initialized, while the differential word embeddings are initialized to $\mathbf{0}$. 
We use the Skip-gram objective function with hierarchical soft-max  to learn the global and the differential embeddings. 
We set the learning rate $\alpha=0.025$, context window size $m$ to $10$ and word embedding size $d$ to be $100$. An example of the domain specific linguistic variation captured by \textsc{DomainDist} is illustrated in Figure \ref{fig:cw}.

\comment{
\begin{figure}[t!]
\vspace{-0.05in}
 \begin{center}
		\includegraphics[trim = 1in 0.7in 1in 0.7in, clip, width=\columnwidth]{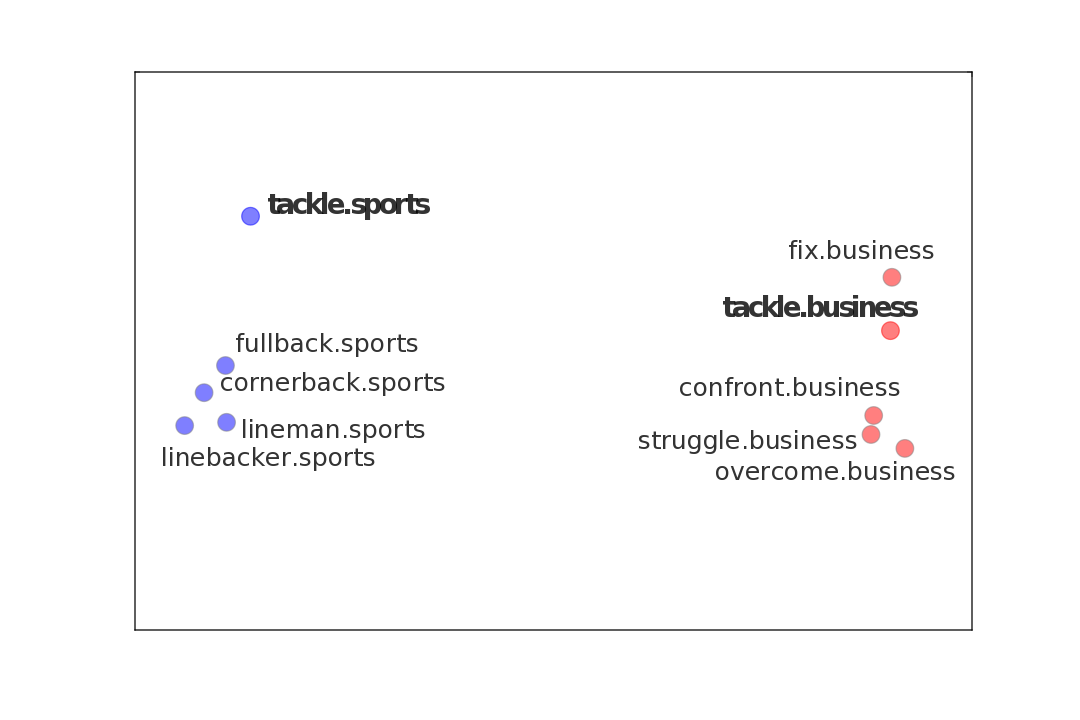}
		\caption{A 2-D projection of the semantic space capturing domain specific differences in the usage of the word \texttt{tackle} between Sports and Finance. Observe how \texttt{tackle} is close to words like \texttt{fix}, \texttt{overcome} and has the sense of \emph{problem solving} in Finance. This is in contrast to Sports where \texttt{tackle} is a  move in \emph{American football} and is close to other American football terms like \texttt{quarterback} and \texttt{lineback}.}
	\label{fig:dist}
	\end{center}
	\vspace{-0.1in}
\end{figure} 
}
\subsubsection{\textsc{DomainSense}}
Here we outline yet another method to capture semantic variation in word usage across domains. We model the problem as follows:
\begin{itemize}[noitemsep, nolistsep]
\item \textbf{Sense Specific Embeddings} We assume each word $w$ has potentially $S$ senses where we seek to learn not only an embedding for each sense of $w$ but also infer what these senses are from the corpus $\mathcal{C}$.
\item \textbf{Sense Proportions in Domains} The usage of $w$ in each domain $d$ can be characterized by a probability distribution $\pi_{d}(w)$ over the inferred senses of $w$.
\end{itemize}   

To learn sense specific embeddings, we use the Adaptive Skip-gram model proposed by \cite{bartunov2015breaking} to automatically infer (a) the different senses a word $w$ exhibits (b) a probability distribution $\pi(w)$ over the the different senses a word exhibits in the corpus and (c) an embedding for each sense of the word. Specifically, we combine the sub-corpora of different domains to form a single corpus $\mathcal{C}$.  We then learn sense specific embeddings for each word $w$ in $\mathcal{C}$ using the Adaptive Skip-gram model.
We set the number of dimensions $d$ of the embedding to $100$, the maximum number of senses a word has $S=5$ and restrict the vocabulary to only words that occur more than $100$ times. 

Finally, given a word $w$ we quantify the difference in the sense usage of $w$ between two domains $d_{i}$ and $d_{j}$ as follows:
\begin{enumerate}
\item Disambiguate each occurrence of $w$ in $d_{i}$ and $d_{j}$ using the method described by \cite{bartunov2015breaking}. We can then estimate the sense distribution of word $w$ in domain $d_{i}$, $\pi_{d_{i}}(w)$  as 
$Pr(\pi_{d_{i}}(w) = s) = \frac{\#_{d_{i}}(Sense(w) = s)}{\#_{d_{i}}(w)}$
where $\#_{d_{i}}(X)$ represents the count of number of times $X$ is true in domain $d_{i}$.
\item  We then compute the Jennsen-Shannon Divergence (JSD) between the sense distributions of the word $w$ between the two domains $d_{i}$ and $d_{j}$ to quantify the difference in sense usage of $w$ between these domains.
\end{enumerate}
Table \ref{tab:senses} shows a small sample of words along with their inferred senses using this method. Figure \ref{fig:dist} then depicts the domain specific difference in the sense usages of \texttt{goal} as computed by \textsc{DomainSense}.  

\begin{figure}[t!]
	\vspace{-0.05in}
	\begin{center}
		\includegraphics[width=\columnwidth]{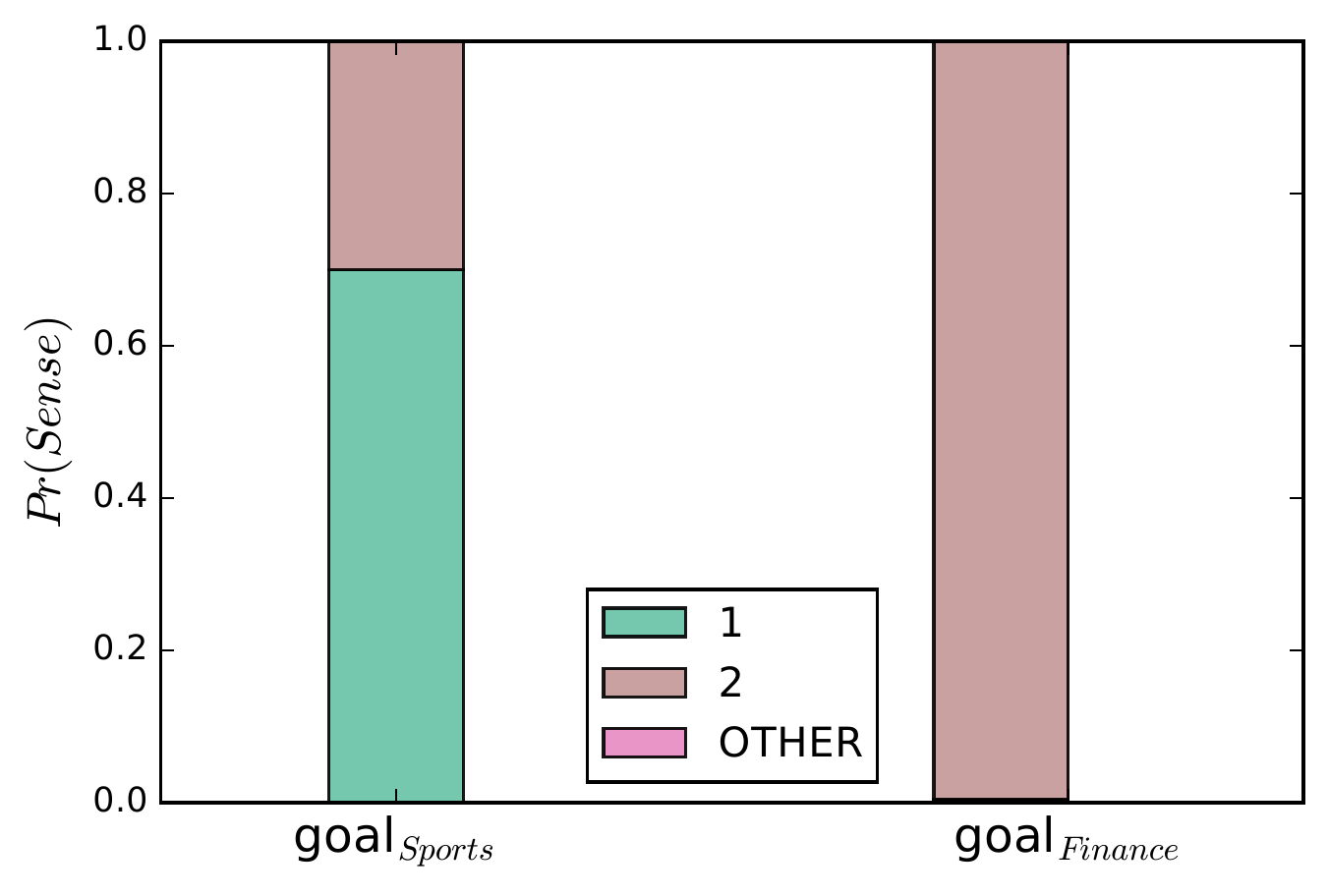}
	\end{center}
	\vspace{-0.1in}
	\caption{Different sense proportions of \texttt{goal} in Sports and Finance as computed by \textsc{DomainSense}. The word \texttt{goal} has two inferred senses as shown in Table \ref{tab:senses}: \textsc{Sense1} corresponds to the sense of \texttt{goal} as a \emph{score} in games or sports. \textsc{Sense2} corresponds to the sense of \texttt{goal} as an \emph{objective}. The usages of these senses is different in Sports and Finance. Note that in Sports, \textsc{Sense1} is dominant while in Finance the usage is exclusively \textsc{Sense2}.}
	\label{fig:dist}
\end{figure}

\begin{table*}[t!]
	\small
	\centering
	\begin{tabular}{l|c|p{4cm}|p{4cm}|p{4cm}}
		\textbf{Word} & \textbf{\#(Senses)} & \textbf{Sense 1} & \textbf{Sense 2}  & \textbf{Sense 3} \\ \hline
		tackle & 2 & handle, avoid, sidestep & linebacker, cornerback, defensive  & -- \\
		track & 3 & field, swimming, lacrosse & song, album, remix & bike, downhill, route, lane, dirt \\
		board & 2 & committee, chairperson, chair &  deck, boards, bench, boat, raft & -- \\
		heats & 2 & rounds, semifinals & cools, warmed, dried
		 & -- \\
		 goal & 2 & try, hat-trick, game-winning & aim, mission, objective & -- \\
 	\end{tabular}
 	\caption{The senses inferred for a sample set of words by Adaptive Skip-gram. Each word's sense is succinctly described by the nearest neighbors of that word's sense specific embedding. Note the different senses of words like \texttt{heats} and \texttt{tackle}. These senses are used in different proportions in various domains as shown for \texttt{goal} in Figure \ref{fig:dist}. }
	\label{tab:senses}
\end{table*}

While both \textsc{DomainDist} and \textsc{DomainSense} explicitly capture domain specific differences in word semantics, they differ in their underlying models. \textsc{DomainDist} captures domain specific word semantic/usage by directly learning domain specific word representations. \textsc{DomainSense} on the other hand infers different senses of a word and learns an embedding for each sense. Domain specific differences are then modeled by differences in sense usage of the word across domains. To illustrate this difference, consider the word \texttt{goal}. \textsc{DomainDist} will capture the fact that \texttt{goal} is associated with \texttt{match}, \texttt{winning} in Sports and capture this sense of \texttt{goal} in the \emph{Sports} Specific Embedding. \textsc{DomainSense} in contrast will infer that \texttt{goal} has two senses overall (see Table  \ref{tab:senses}) and then capture that in Sports both these senses are used. Moreover, the sense related to \texttt{score} is used $70\%$ of the time while the sense associated with \texttt{objective} is estimated to be used $30\%$ of the time in Sports. Finally, we empirically evaluate the effectiveness of both \textsc{DomainDist} and \textsc{DomainSense} for the task of NER (Section \ref{sec:results}).

\subsubsection{Error Bounds on Estimated JS Divergence}
Note that for any given word $w$, the empirical probability estimate computed as $Pr(\pi_{d_{i}}(w) = s)$ is estimated from its usage in the sample corpus and is hence a random variable. Since these estimates of probabilities are further used to compute the JS Divergence between the sense distributions over two domains $(d_{i}, d_{j})$ this estimate is also a random variable and demonstrates variance. We now provide theoretical bounds on the standard deviation of the computed Jennsen Shannon Divergence. 
\begin{lemma}
	Let $n_{a}(w)$ and $n_{b}(w)$ be  the total number of occurrences of a word $w$ in domains $d_{a}$ and $d_{b}$ respectively. The standard deviation in the JS divergence of the sense distribution of  $w$ across this domain pair is $\mathcal{O}(\frac{1}{\sqrt{n_{a}(w)}}+\frac{1}{\sqrt{n_{b}(w)}})$
	\label{lemma:js}
\end{lemma}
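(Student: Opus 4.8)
The plan is to treat the estimated divergence $\widehat{\mathrm{JSD}}$ as a smooth function of two empirical multinomial proportion vectors and to propagate the sampling error through a first-order (delta-method) argument. First I would make explicit the modelling assumption implicit in the construction: writing $P:=\pi_{d_a}(w)$ and $Q:=\pi_{d_b}(w)$ for the true sense distributions and $\hat P,\hat Q$ for the empirical ones, I assume that within domain $d_a$ the $n_a(w)$ disambiguated occurrences of $w$ are independent draws from the categorical law $P$, so the sense-count vector is $\mathrm{Multinomial}(n_a(w),P)$ and $\hat P$ is its normalisation (likewise for $d_b$). Two consequences drive the proof: (i) each coordinate obeys $\mathrm{Var}(\hat P_s)=P_s(1-P_s)/n_a(w)\le 1/(4n_a(w))$, so every coordinate concentrates at rate $1/\sqrt{n_a(w)}$; and (ii) since $d_a$ and $d_b$ come from disjoint sub-corpora, $\hat P$ and $\hat Q$ are independent, which will annihilate the cross terms in the variance.

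Next I would write $\widehat{\mathrm{JSD}}=g(\hat P,\hat Q)$ with $g(P,Q)=H\!\left(\tfrac{P+Q}{2}\right)-\tfrac12 H(P)-\tfrac12 H(Q)$, where $H$ is the Shannon entropy, and Taylor-expand $g$ about the true pair $(P,Q)$. To first order,
\[
\widehat{\mathrm{JSD}}-\mathrm{JSD}\;\approx\;\sum_{s}\partial_{P_s} g\,\bigl(\hat P_s-P_s\bigr)+\sum_{s}\partial_{Q_s} g\,\bigl(\hat Q_s-Q_s\bigr).
\]
Taking variances and using the independence of the two domains, the variance splits into a domain-$a$ term and a domain-$b$ term, each a quadratic form $\nabla_P g^\top \Sigma_a \nabla_P g$ (resp. with $b$), where the multinomial covariance $\Sigma_a=\tfrac{1}{n_a(w)}\bigl(\mathrm{diag}(P)-PP^\top\bigr)$ scales as $1/n_a(w)$. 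Hence $\mathrm{Var}(\widehat{\mathrm{JSD}})=\mathcal{O}(1/n_a(w))+\mathcal{O}(1/n_b(w))$, and taking square roots with the subadditivity $\sqrt{x+y}\le\sqrt{x}+\sqrt{y}$ gives the asserted standard deviation $\mathcal{O}\!\left(1/\sqrt{n_a(w)}+1/\sqrt{n_b(w)}\right)$.

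The hard part is controlling the constant, i.e. bounding $\|\nabla g\|$, because the entropy gradient $\partial_{P_s}H(P)=-\log P_s-1$ diverges as a sense probability tends to $0$, so $\nabla g$ is not uniformly bounded over the simplex and the naive delta method breaks at the boundary. I would handle this in one of two ways. The clean route assumes the true sense probabilities are bounded below by some $p_{\min}>0$ (reasonable, since with at most $S=5$ senses a sense that genuinely occurs in a domain has non-vanishing frequency); then every partial derivative of $g$ is $\mathcal{O}(\log(1/p_{\min}))$, a constant independent of $n_a(w),n_b(w)$, and the argument above closes with the $S$- and $p_{\min}$-dependence absorbed into the $\mathcal{O}(\cdot)$. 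The more careful route, avoiding that assumption, splits the senses into ``heavy'' ones (probability $\gtrsim 1/n$), where the bounded-gradient estimate applies, and ``light'' ones, whose contribution is negligible because $x\mapsto x\log x$ is continuous with $x\log x\to 0$ and the associated counts are $\mathcal{O}(1)$, adding at most a lower-order $\mathcal{O}(\log n/\sqrt{n})$ term. Either way the dependence on the sample sizes is exactly $1/\sqrt{n_a(w)}$ and $1/\sqrt{n_b(w)}$, as the lemma claims.
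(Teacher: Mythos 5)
Your proposal is correct and takes essentially the same route as the paper: the supplemental proof likewise treats the estimated divergence as a function of the multinomial MLEs $\hat{p}_i,\hat{q}_i$ with $\sigma_{p_i}=\sqrt{p_i^*(1-p_i^*)/n_a(w)}$ and $\sigma_{q_i}=\sqrt{q_i^*(1-q_i^*)/n_b(w)}$, and applies first-order propagation of uncertainty (your delta method) to obtain the $\mathcal{O}\bigl(1/\sqrt{n_a(w)}+1/\sqrt{n_b(w)}\bigr)$ scaling. If anything you are more rigorous than the paper, which silently drops the intra-multinomial covariance terms (``each parameter is estimated independently'') and never addresses the unboundedness of the sensitivity coefficients $\partial JS/\partial p_i$ near the simplex boundary --- two gaps that your $\Sigma_a=\tfrac{1}{n_a(w)}\bigl(\mathrm{diag}(P)-PP^{\top}\bigr)$ quadratic form and your $p_{\min}$/heavy--light splitting close explicitly.
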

\begin{proof}
	We provide the proof in the supplemental material.
\end{proof}
The bound above implies the following: (a) We can quantify the uncertainty in our estimates based on the frequency of the words in the corpus and interpret our results with greater confidence. Very rare words would have larger deviations. (b) Depending on an applications sensitivity to error, we can estimate the appropriate sample complexity needed. \footnote{Our reported results (see Figure \ref{fig:adagram_table}) computing JS Divergence all have counts $>=1000$ in both domains and hence have low errors.} 
\subsection{Domain Adaptation for NER}
\label{sec:ner}
In the previous section, we described methods to capture domain specific linguistic variation in word semantics/usage by learning word embeddings that are domain specific. In this section, we outline how to learn NER models for the various domains using such word embeddings as features.  

As in previous works, we treat NER as a sequence labeling problem. To train, we use CRFsuite \cite{CRFSuite} with L-BFGS algorithm. We use a BILOU label encoding scheme. The features we use are listed in Table \ref{tab:features}. Our main features are tokens and word embeddings, within a small window of the target token.  We investigate using different kinds of embeddings listed below:
\begin{itemize}
	\item \textbf{Generic Word2vec embeddings}: We learn generic Skipgram embeddings using English Wikipedia.
	\item \textbf{Domain/Sense Specific Word Embeddings}: We experiment by using the embeddings learned using \textsc{DomainDist} and \textsc{DomainSense}.
\end{itemize}

 \begin{table*}[t!]
 \small
 	\begin{tabular}{l|p{12cm}}
 		Feature & Description \\ \hline
 		Tokens & $w_{i}$ for $i$ in $\{-2,\dots +2\}$,  $w_{i}$ and $w_{i+1}$  for $i$ in $\{-1,0\}$\\
 		Embeddings & Embeddings for $w_{i}$ for $i$ in $\{-2,\dots +2\}$ \\
 		Morphological & Shape and capitalization features, token prefixes and suffixes (up-to length 4), numbers and punctuation.
 	\end{tabular}
 	\caption{Summary of features we use for learning Named Entity Recognition (NER) models.}
 	\label{tab:features}
 \end{table*}
  
\subsubsection{\textsc{DomainEmbNER}}
Here, we outline the supervised domain adaptation method that uses domain specific word embeddings to learn NER models that significantly  outperform other baselines on NER task in the domain adaptation setting.
In this setting, we are interested in a Named Entity Recognition system for domain $\mathcal{T}$. 
However training data available for domain $\mathcal{T}$ is scarce but we have access to a  source domain $\mathcal{S}$ for which we have large number of training examples. We would like to perform domain adaptation by learning a model using the large amount of training data in source domain $\mathcal{S}$ and adapt it to work well on the target domain $\mathcal{T}$. There exist a number of methods for the task of supervised domain adaptation \cite{jiang2008domain}. 
We use a simple method for this task outlined below: 
\begin{enumerate}
	\item Combine the training data from $\mathcal{S}$ and $\mathcal{T}$. Note again that $|\mathcal{S}| >> |\mathcal{T}|$ in our setting.
	\item Extract the features outlined for training the CRF model as out-lined in Table \ref{tab:features}. Note that we experiment with different kinds of word embeddings and baselines. 
	\item Learn a CRF model using this training data.
	\item Evaluate the learned CRF model on the domain specific test data set and report the performance.
	
As we will show in Section \ref{sec:results}, using domain specific word embeddings improves the performance of NER on these target domains significantly and outperforms previous baselines for this task. 
\end{enumerate}

\subsubsection{\textsc{ActiveDomainEmbNER}}
In this section, we describe how we can learn a Named Entity Recognition system, assuming we have no labeled training data in the target domain. We can however request for a small number of examples $B$ to be labeled by annotators.  In such a setting, one can actively choose the set of examples that need to be labeled which will be most useful to learn a good model.  We propose a method to actively label examples for the purpose of domain adaptation which we describe succinctly in Algorithm \ref{alg:activelearn}. In Section \ref{sec:results} we show that by merely asking for an editorial to label $1500$ sentences, we can achieve performance close to state of art in this setting.
\begin{algorithm}[tb!]
	\small
	\caption{\small \textsc{ActiveDomainEmbNER} ($S$, $T$, $B$, $k$)}
	\label{alg:activelearn}
	\begin{algorithmic}[1]
		\REQUIRE $S$: Training data for NER in the source domain, $T$: Unlabeled data for the task of NER in the target domain which is separate and distinct from the final test set. $B$: Number of actively labeled examples, $k$: Batch size of actively labeled examples.
		\ENSURE $M$: NER model 
		\STATE $\textsc{C} \gets S$ 
		 \label{lst:line:bsbegin}
		\REPEAT
		\STATE Learn a model $M$ using $\textsc{C}$
		\STATE Evaluate $M$ on $T$.
		\STATE $E \gets $ Sort the evaluated phrases of $T$ in ascending order of model confidence (probability) and remove top $k$ least confident examples. 
		\STATE Ask an expert to label each example in $E$ and add them to $\textsc{C}$.
		\STATE $\textsc{C} \gets \textsc{C} \cup E$
		\UNTIL{$|\textsc{C}| \geq |S|+ B$} 
		\STATE \textbf{return} $M$
	\end{algorithmic}
	\vspace{-0.01in}
\end{algorithm}

\section{Datasets}
\label{sec:datasets}
In this section, we outline details of the datasets we consider for our experiments.

Our datasets can be classified into 2 categories (a) Unlabeled data for learning word embeddings and (b) Labeled data for the task of NER, each of which we describe below.

\subsection{Unlabeled Data}
We use the following unlabeled data sets for the purpose of learning word embeddings. We consider (a) all sentences of English Wikipedia (b) a random sample of 1 Million articles from Yahoo! Finance restricting our language to only English and (c) a random sample of 1 Million articles from Yahoo! Sports restricting our language to only English.

\subsection{Labeled Data}
We also use labeled data sets for the task of learning NER models which we summarize in Table \ref{tab:datasets_labeled}.


\begin{table}[t!]
\small
	\begin{tabular}{l|p{1cm}p{1cm}p{1cm}}
		& ConLL & Yahoo Finance & Yahoo Sports \\ \hline
		\# Sents (train) & $14808$  & $6439$ &
		 $4077$ \\
	   \# Sents (test) & $3648$ & $ 4294$&
	$2719$ \\		  
		Domain & News & Finance & Sports \\
	\end{tabular}
	\caption{Summary of our editorially labeled data.}
	\label{tab:datasets_labeled}
\end{table}

\section{Experiments}
\label{sec:results}
Here, we briefly describe the results of our experiments on (a) Domain Specific Linguistic Variation and (b) Domain Adaptation for Named Entity Recognition.
\subsection{Domain Specific Linguistic Variation}
Table \ref{tab:domain_dist} shows some of the semantic differences in word usage captured by \textsc{DomainDist}. Observe that the method is able to capture words like \texttt{quote, overtime, hurdles} that have alternative meanings (semantics) in a domain. For example, the word \texttt{hurdles} means \texttt{challenges} in Finance but a kind of athletic \texttt{race} in Sports.  In addition to capturing words that differ in semantics, note that \textsc{DomainDist} also uncovers differing semantic usages of entities as well, as depicted in Figure \ref{fig:cw}.  In the domain of Finance, \texttt{Anthem} refers to a health insurance company but \texttt{Anthem} in Sports dominantly refers to a \texttt{song} like a team anthem.  In Figure \ref{fig:adagram_table} a sample set of words detected by \textsc{DomainSense} are shown. Note once again, that we are able to capture domain specific differences between words (both entities and non-entities). Furthermore, \textsc{DomainSense} is able to quantify the proportion of each word sense usage in various domains. For example, the word \texttt{tackle} is used exclusively in Finance as a verb that means \texttt{to solve}, whereas in Sports \texttt{tackle} is dominantly used to refer to an \texttt{American football move}. 
Note that in Sports, the sense of \texttt{tackle} that means \texttt{to solve} is only used $~30\%$ of the time.

This ability to capture differing entity roles (like Organizations and Persons) provides an insight into the effectiveness of domain specific embeddings for improved performance on Named Entity Recognition.

\begin{table*}[t!]
	\small
	\centering
	\begin{tabular}{l|c|p{6cm}|p{6cm}}
		\textbf{Word} & \textbf{Distance} & \textbf{Usage(Finance)} & \textbf{Usage(Sports)} \\ \hline
		quote & 0.70 & an official document (used as ``details of the quote'' ) & an aphorism (a saying) \\
		selections & 0.94 & selections of menus, checkouts, products & selection to an honor (an award, recognition)\\
		overtime & 0.93 &  used as ``overtime pay'' & A checkpoint in a match (used as ``double overtime'') \\
		Assists & 0.89 & Assist, Coordinate (as in help) & A term in American football \\
		hurdles & 0.89 & setbacks, obstacles & a type of athletic race \\
		Anthem & 0.97& Health Insurance Company (similar to Aetna, Metlife) & Song (of a band, team etc) used as ``Sing the anthem'' \\
		Hays & 0.88 & Hays Advertising & Last Name of person\\
		Schneider & 0.88 & Schneider Electric (company) & Last name of a person \\
		Hugo & 0.88 & Name of a company (like Hugo Boss) & First Name of a person \\
	\end{tabular}
	\caption{Sample words that depict the differences (and the measured distance) in word semantics between Sports and Finance by \textsc{DomainDist}. Note that we capture semantic differences in words that are entities (\texttt{Anthem, Schneider}) and non-entities (\texttt{quote, overtime}).}
	\label{tab:domain_dist}
\end{table*}

\subsection{Domain Adaptation for Named Entity Recognition}
In this section, we report the results of using our \textsc{DomainDIst} and \textsc{DomainSense} word embeddings for the task of Named Entity Recognition on Finance and Sports Domains in the domain adaptation setting as described in Section \ref{sec:methods}. We also outline the baseline methods we compare to below:
\subsubsection{Baseline methods}
Since our setting is the setting of domain adaptation for Named Entity Recognition, we consider several competitive baselines for this task:
\begin{itemize}[noitemsep,nolistsep]
	\item \textbf{CoNLL-only Model}: We consider a simple baseline where we train a NER model only using CoNLL data and generic Wikipedia Embeddings without any adaptation to the target domain.
	\item \textbf{Feature Subsetting}:  This domain adaptation method tries to penalize features which demonstrate large divergence between source and target domains \cite{satpal2007domain}.  It is worth noting that this models the task of NER as a classification problem and not a structured prediction problem. \footnote{We use the implementation of feature sub-setting for Named Entity Recognition provided by \url{https://github.com/siqil/udaner}.}  
	\item \textbf{Online-FLORS}: \textsc{FLORS} learns robust representations of each word based on distributional features and counts, to boost performance across domains and treats the tagging problem as a classification problem. We also use a random sample of $100K$ unlabeled sentences from each domain which \textsc{FLORS} uses to enrich the robustness of representations learned.
	We consider a scalable version of \textsc{FLORS} \cite{yin2016online}.
	\item \textbf{FEMA}: \textsc{Fema} \cite{yang2015unsupervised} learns low dimensional embeddings of the features used in a CRF model by using a variant of the Skipgram Model \cite{mikolov2013efficient}. These features can be used to learn a model for sequence tagging. While they demonstrate their method on Part of Speech tagging, the method itself is general and can be applied to other tasks like Named Entity Recognition as well and provide a nice replacement for word embeddings as features in a CRF model. We learned $100$ dimensional \textsc{FEMA} embeddings in our experiment. \footnote{We use the open source implementation  provided at \url{https://github.com/yiyang-gt/feat2vec}}
\end{itemize}


\begin{figure*}[ht!]
	\vspace{-0.05in}
	\begin{center}
		\includegraphics[width=\linewidth]{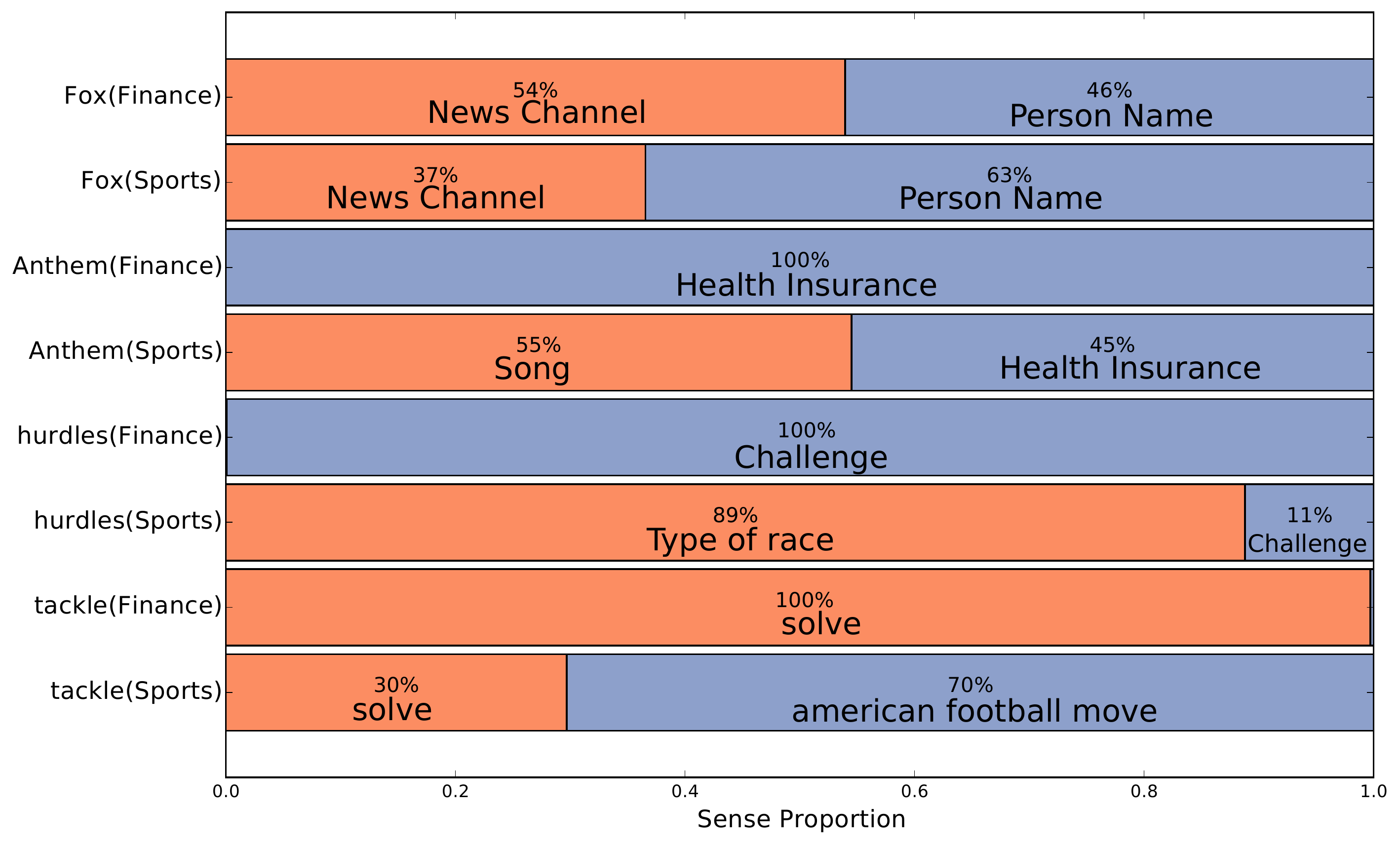}
		\caption{Sample set of words and their sense proportions in Sports and Finance as computed using \textsc{DomainSense}. Note the differences in sense usages of \texttt{Anthem}, \texttt{hurdles} and other words.}
		\label{fig:adagram_table}
	\end{center}
	\vspace{-0.1in}
\end{figure*}

\begin{table*}[tp!]
	\vspace{-0.1in}
\small
	\centering
	\begin{tabular}{l|l|ccc|ccc}
		\textbf{Data} & \textbf{Embeddings/Method} & \multicolumn{3}{c|}{Finance} & \multicolumn{3}{c}{Sports} \\
		\hline
		& & \textbf{Precision} & \textbf{Recall} & \textbf{F1} & \textbf{Precision} & \textbf{Recall} & \textbf{F1} \\ 
		ConLL-Only & Wikipedia & $51.32$ & $54.86$ & $53.03$ & $74.09$ & $68.31$ & $71.09$\\
		ConLL + Target & Feature Subsetting & $34.18$ & $45.28$ & $37.54$ &  $49.89$ & $48.63$ & $45.80$ \\
	    ConLL + Target & FLORS & $35.75$ & $46.78$ & $40.53$ & $63.48$ & $62.18$ & $62.82$\\
	    ConLL + Target & \textsc{Fema} & $67.30$ &  $68.10$ & $67.70$ & $83.18$ & $81.79$ & $82.48$ \\
	    ConLL + Target  & Wikipedia Embeddings & $70.97$ & $72.23$ & $71.59$  & $86.17$ & $85.14$ & $85.65$\\
	     ConLL + Target &  \textsc{DomainSense} embeddings & 67.22 & 68.0 & 67.61 & 83.23 &  81.82 & 82.52\\
	    ConLL + Target &  \textsc{DomainDist} embeddings & \textbf{71.62} & \textbf{72.5} & \textbf{72.06} & \textbf{85.72} & \textbf{85.88} & \textbf{85.8} \\
	\end{tabular}
	\caption{Performance of various domain adaptation methods on Named Entity Recognition in the target domains. The target domain (Target) here is one of Finance or Sports. Number of training sentences used from Finance:3219 while for Sports we use 2038 sentences. This corresponds to using only 50\% of the target domain training data to perform domain adaptation.}
	\label{tab:finance}
\end{table*}

\subsubsection{Results and Discussion}
Table \ref{tab:finance} shows the performance of our methods and other baselines on Finance and Sports. First note that a CoNLL-only model without any domain adaptation results in poor performance. Domain Adaptation methods like Feature Subsetting and \textsc{FLORS} that model Named Entity Recognition as a classification problem, rather than a sequence prediction problem perform even worse. In contrast \textsc{Fema} which learns dense representations of CRF features which can then be used to learn a more robust CRF model (that is more suited to domain adaptation) yields an significantly improved F1 score  of $67.70$ on Finance and $82.48$ on Sports respectively. Empirically we observe that using \textsc{DomainSense} embeddings improves the performance over the ConLL only model, but does not perform as well on this task (especially in Finance). We hypothesize while decomposing a word into multiple fine-grained senses is useful to capture semantic variation, using fine-grained sense embeddings for every word results in a overly complex decision space when used for tasks like NER. Finally observe that \textsc{DomainDist} which learns domain specific embeddings (without explicitly decomposing words into their senses)  outperforms all these methods in both domains. This superior performance results from the ability to capture useful broad domain specific differences more effectively.

In Table \ref{tab:proportions}, we evaluate the performance of using domain specific word embeddings \footnote{For brevity. we present results here only \textsc{DomainDist} embeddings as \textsc{DomainDist} embeddings are the best performing embeddings.} against using just generic Wikipedia based word embeddings as a function of available training data $\alpha$.  First observe that on an average, using \textsc{DomainDist} word embeddings improves the performance over using generic Wikipedia based embeddings. Observe that in general, as the amount  of training data in the \emph{target} domain increases, the advantage (gain) of using domain specific embeddings reduces. For example, when only $10\%$ of training data is available for Finance, using domain specific word embeddings results in F1 Score gain of \textbf{1.21}. However when $90\%$ of training data is available in Finance we get a small but still significant boost of \textbf{0.42} in the F1 score on using domain specific word embeddings. We explain this by noting that as the proportion of training data in the target domain increases, the model is able to pick up on domain specific cues and fine-tune its decision boundary better without needing to rely too much on the domain specific cues captured by the domain specific word embeddings.

\begin{table}[ht!]
	\small
	\centering
	\begin{tabular}{c | c | c | c | c } 
		$\alpha$ & \multicolumn{2}{c|}{\textbf{Finance}} & \multicolumn{2}{c}{\textbf{Sports}}\\
		\hline 
		& Wiki  & \textsc{DomainDist} & Wiki & \textsc{DomainDist} \\
		0.1	&  61.28 & \textbf{62.50} & 80.18 & \textbf{80.81}\\
		0.2	& 66.36 & \textbf{66.61} & 83.61 & \textbf{83.91}\\
		0.3	& 67.57 & \textbf{68.35} & 84.42 & \textbf{85.34}\\
		0.4	& 69.40 & \textbf{70.61 }& 85.49 &  \textbf{85.83}\\
		0.5	& 71.20 & \textbf{71.55} & 86.30 & \textbf{86.26}\\
		0.6	& 71.12 & \textbf{71.71} & 87.08 & \textbf{87.11}\\
		0.7	& 72.44 & 72.42 & 87.07 & \textbf{87.25}\\
		0.8	& 72.99 & \textbf{73.33} & 88.12 & \textbf{88.45}\\
		0.9	& 73.59 & \textbf{74.02} & 88.00 & \textbf{88.26}\\
	\end{tabular}
	\caption{Performance of \textsc{DomainEmbNer} using \textsc{DomainDist} Embeddings versus Wikipedia Embeddings on NER task against different proportions of training data in target domain. 
	}
	\label{tab:proportions}
\end{table}

\begin{table}[ht!]
	\small
	\centering
	\begin{tabular}{l|l|l|p{1cm}|p{1cm}}
		\textbf{\#(Sents)} & \multicolumn{2}{c|}{\textbf{Training Data \%}}
		& \textbf{Finance (F1)} & \textbf{Sports (F1)} \\
		\hline 
		& Finance & Sports & & \\
		500 & 7.7 & 12.2 & 67.68 &  83.94\\
		1000 & 15.5 & 24.5 & 69.00 &  85.53\\
		1500 & 23.2 & 36.7 & 69.79 &  86.78\\
		2000 & 31.0 & 49.0 & 70.53 & 87.14 \\
	\end{tabular}
	\caption{Performance of \textsc{ActiveDomainDistNER} on the target domains of Finance and Sports using \textsc{DomainDist} as a function of actively labeled sentences.}
	\label{tab:active_domain_dist_ner}
\end{table}

Table \ref{tab:active_domain_dist_ner} shows the performance of \textsc{ActiveDomainEmbNER} as a function of number of sentences we sought to be actively labeled. Note that merely requiring $1500$ sentences to be manually annotated, we are able to achieve close to state of art F1 performance (\textbf{69.79} on Finance and \textbf{86.78} on Sports respectively) outperforming competitive baselines.  

\section{Related Work}
Related work can be organized into two areas: (a) Socio-variational linguistics and (b) Domain Adaptation.
\vspace{-0.1in}
\label{sec:related}
\paragraph{Socio-variational linguistics}
Several works study how language varies according to geography and time \cite{eisenstein2010latent,eisenstein2011discovering,bamman2014gender,bamman2014distributed,kim-EtAl:2014:W14-25,kulkarni2015statistically,kenterad,gonccalves2014crowdsourcing,kulkarni2015freshman,cook2014novel,frermann2016bayesian,hamilton2016diachronic}. Different from these studies, our work seeks to identify semantic changes in word meaning (usage) across domains with a focus on improving performance on an NLP task like NER. The methods outlined in \cite{kulkarni2015freshman,bamman2014distributed} are most closely related to our work. While we directly build on methods outlined by them we differentiate ourselves from their work in two ways. First, we show that the model proposed in \cite{bamman2014distributed} is not only useful for learning domain specific word embeddings but the model itself can be utilized for the task of domain disambiguation by using an often neglected set of model parameters (the output vectors of the model). Second, we also present a method to quantify the semantic variation in word usage by explicitly modeling differences in the usage of different senses of words. While the methods outlined in \cite{kulkarni2015freshman,bamman2014distributed} capture domain specific differences, they do not explicitly model the fact that words have multiple senses and their usage in a domain is a mixture of different proportions over these senses which can be explicitly quantified. Finally we apply these methods to identify and analyze semantic variation in word usage across domains like Sports and Finance, highlight interesting examples of such variation prevalent across these domains with applications to Named Entity Recognition.
\vspace{-0.1in}
\paragraph{Domain Adaptation}
There is a long line of work on  domain adaptation \cite{evgeniou2004regularized,ando2005framework,blitzer2006domain,jiang2007instance,satpal2007domain,daume2009frustratingly,chen2012marginalized,schnabel2014flors,yang2015unsupervised}. Most of these works can be classified based on the strategies they use as follows: (a) Instance Weighting Methods \cite{satpal2007domain,jiang2007instance} (b) Regularization based methods \cite{evgeniou2004regularized,daume2009frustratingly} and (c) Representation Induction \cite{blitzer2006domain,chen2012marginalized,schnabel2014flors,yang2015unsupervised}. Our method of learning domain specific word embeddings in an unsupervised manner can be placed into this final category. Finally an excellent survey of various domain adaptation algorithms for NLP is provided by \cite{jiang2008domain,li2012literature}.
\section{Conclusions}
\label{sec:Conclusions}
In this paper, we proposed methods to detect and analyze semantic differences in word usage across multiple domains. 
Our methods explicitly capture domain specific cues by learning word embeddings from unlabeled text and scale well to large web scale data sets. 
Furthermore, we outline methods that leverage such domain specific linguistic variation and knowledge effectively to boost performance on NLP tasks like Named Entity Recognition on domains with scarce training data and requiring domain adaptation. 
Our methods not only out-perform previous competitive baselines but also require a very small number of manually annotated sentences in the target domain to achieve competitive performance. 
We believe our work sets the stage for new directions and further research into applications that effectively model linguistic variation across domains to improve the performance, applicability and usability of NLP systems analyzing the diverse content on the Internet.  
\section*{Acknowledgments}
{\small We thank Akshay Soni, Swayambhoo Jain, Aasish Pappu and  Kapil Thadani for valuable insights and discussions.} 
\bibliographystyle{eacl2017}
\bibliography{paper}
\appendix
\section{Supplemental Material}
\subsection{Domain Disambiguation}
\label{sec:domaindisamb}
In this section, we outline a method \textsc{DomainDist++} that builds on \textsc{DomainDist} to infer the domain given a phrase highlighting a word's usage is likely to belong to. 
Specifically, given a finite set of domains $\mathcal{D}$,  a word $w$ and a set of context words $\mathcal{T}$, we would like to infer the most likely domain $d$ which reflects the given usage of $w$.
As an illustration, suppose we have two potential domains $\mathcal{D} = \{\textsc{Sports}, \textsc{Finance}\}$ and given the usage of word \texttt{loss} as \texttt{loss of the dividend} we would like to infer that this usage is most likely from the \emph{Finance} domain.  

First note that \textsc{DomainDist} models the probability of a word $o$ given a context word $c$ as follows:

\begin{equation}
Pr(o|c, D=d) = \frac{exp(\mathbf{u_{o}}^{T}\mathbf{v_{c}})}{\sum_{\mathbf{v}\in \mathcal{V}}{exp(\mathbf{u_{o}}^{T}\mathbf{v})}}
\label{eq:skipgram}
\end{equation} 
where $D$ is discrete random variable that represents the particular domain used. The vector $\mathbf{u_{o}}$ corresponds to the output vector for $o$ while $\mathbf{v_{c}}$ corresponds to the input vector (word embedding) for word $c$ in domain $d$. 

Given a word $o$ and its context word $c$ (where $c$ is a context word in the usage of $o$), we seek to compute the probability  $Pr(D=d|o, c)$. We decompose this as follows:
\begin{dmath}
Pr(D=d|o,c) \propto Pr(o|c, D=d) \\
Pr(c|D=d) Pr(D=d) 
\label{eq:disamb}
\end{dmath}

The first term in Equation \ref{eq:disamb} is modeled by Equation \ref{eq:skipgram} which can be efficiently computed in $\mathcal{O}(\log|\mathcal{V}|)$ using hierarchical soft-max \cite{mikolov2013efficient}. The second term in Equation \ref{eq:disamb} is easily estimated by computing the relative frequency of $c$ in the corpus specific to domain $d$.  The final term is just a prior on the domains (which can be computed by relative sizes of the domain specific corpora or set to uniform).

To conclude, given a word-context pair, we can estimate the domain that characterizes this word-usage by 
\begin{equation}
\hat{d} = \argmax_{d}{Pr(D=d|o,c)}.
\end{equation}

While we discuss how to disambiguate the domain given word-context pair the above can be trivially extended when multiple context words are given. In such a case, we make an independence assumption: \emph{Each word-context pair is independent}. This decomposes the joint into a product of probabilities for each word context-pair (each of which can be computed by Equation \ref{eq:skipgram}). 

In order to evaluate our method for domain disambiguation, we consider the following three competitive approaches:

\begin{itemize}
	\item \textbf{Unigram Model} One simple method to disambiguate the domain reflecting the usage of a word $w$ along with its context words $\mathcal{T}$, is to estimate the probability of this phrase under a unigram language model specific to the domain.
	Specifically we estimate the domain as follows:
	\begin{equation}
	\hat{d} = \argmax_{d}{Pr(w|D = d)\prod_{c \in \mathcal{T}}{Pr(c|D=d)}}
	\end{equation}
	\item \textbf{DistanceMean (DM)} We consider a simple nearest neighbor based method. For each domain $d$, we compute a score, the mean cosine similarity between the word $w$ and the context words $c\in\mathcal{T}$ and choose the domain with the higher score. In summary, we estimate the domain by:
	\begin{equation}
	\hat{d} = \argmax_{d}{Score(d)}, 
	\end{equation}
	where $Score(d)$ is given by:
	\begin{equation}
	Score(d) = \frac{1}{|\mathcal{T}|}\sum_{c \in \mathcal{T}}{CosineSim(\mathbf{v_{w}}, \mathbf{v_{c}})}, 
	\end{equation}. Here $\mathbf{v_{w}}$ and $\mathbf{v_{c}}$ are the word embeddings for $w$ and $c$ specific to the considered domain $d$. 
	\item \textbf{Context Vector Mean (CVM)} We consider yet another nearest neighbor based method. 
	For each domain $d$, we first compute the mean context embedding $\mathbf{v_{\tilde{T}}}$ by averaging the domain specific word embeddings for each context word $c$. We then compute a score $Score(d)$ which is the cosine similarity between the domain specific word embedding for $w$ and the mean context vector. Therefore, we estimate the domain as follows:
	\begin{equation}
	\hat{d} = \argmax_{d}{Score(d)}
	\end{equation}
	where $Score(d) = CosineSim(\mathbf{v_{w}}, \mathbf{v_{\tilde{T}}})$ and $\mathbf{v_{\tilde{T}}} = \frac{1}{|\mathcal{T}|}\sum_{c \in \mathcal{T}}\mathbf{v_{c}}$
\end{itemize}

To highlight the differences between these methods and our method, we evaluate these methods on a small but insightful dataset of $20$ phrases which we show in Table \ref{tab:disamb_evaluation_full}. We describe our observations and conclusions briefly below:
\begin{itemize}
	\item The Unigram model fails when the context words are not distinctive of domains and occur with similar frequencies in both domains (as illustrated by the first eight examples in Table \ref{tab:disamb_evaluation_full}).
	\item \textsc{DomainDist++} is superior to  Unigram and is competitive with other nearest neighbor baselines. One drawback of the nearest neighbor methods is that they do not explicitly and interpret-ably capture strength of domain membership which \textsc{DomainDist++} captures.  To illustrate, consider disambiguating the usage of the word \texttt{on} in \texttt{we are focused on the heats}.  The word \texttt{on} does not really have distinctive semantic usages between the two domains \textsc{Sports} and \textsc{Finance}. \textsc{DomainDist++} will assign close to equal probabilities of membership of \texttt{on} to both these domains thus naturally capturing the graded membership, which is not possible using nearest neighbor methods (\textbf{DM} and \textbf{CVM}). 
\end{itemize}

\begin{table*}[ht!]
	\begin{tabular}{p{7cm}|ccccc}
		Phrase & Unigram & DM & CVM & \textsc{DomainDist++} & True Domain \\ \hline
		This is a very easy \textbf{race} and you will not need any help & \xmark & \cmark & \cmark & \cmark & Sports \\
		Other than that \textbf{race} like normal & \xmark & \cmark & \cmark & \cmark & Sports  \\
		This is a basic \textbf{race} & \xmark & \cmark & \cmark & \cmark & Sports \\
		All the strategies from the first \textbf{race} apply here & \xmark & \cmark & \cmark & \cmark & Sports \\
		strongly consider purchasing the car for this \textbf{race} & \xmark & \cmark & \cmark & \cmark & Sports \\
		The key to success in this \textbf{race} is ability & \xmark & \cmark & \cmark & \cmark  & Sports \\
		The basic skills to win this \textbf{race} require hard work & \xmark & \cmark & \cmark & \cmark & Sports  \\
		Any legal age sex \textbf{race} & \cmark & \cmark & \cmark & \cmark  & Finance\\
		we are focused on the \textbf{heats} & \xmark & \cmark & \cmark & \cmark & Sports \\
		expectations are set high for the \textbf{heats} & \xmark & \cmark & \cmark & \cmark  & Sports \\
		\textbf{serve} as powerful market players & \cmark & \xmark & \xmark & \cmark & Finance\\
		\textbf{tackle} the opponent with force & \cmark & \xmark & \xmark & \cmark & Sports\\
		\textbf{loss} cents per share & \cmark & \cmark & \cmark & \cmark & Finance\\
		\textbf{tackle} record one solo & \cmark & \cmark & \cmark & \cmark & Sports \\
		strong \textbf{track} record and low turnover & \cmark & \cmark & \cmark & \cmark & Finance \\
		business was producing \textbf{loss} & \cmark & \cmark & \cmark & \cmark & Finance\\
		disappointed but proud of the \textbf{race} & \cmark & \cmark & \cmark & \cmark & Sports \\
		Talk to him to start the \textbf{race} & \cmark & \cmark & \cmark & \cmark & Sports \\
		water heaters \textbf{heats} on demand & \cmark & \cmark & \cmark & \cmark & Finance \\
		\textbf{tackle} defensive prospect & \cmark & \cmark & \cmark & \cmark & Sports \\	     
	\end{tabular}
	\caption{Set of words (in bold) and their usages disambiguated by all the 4 methods for evaluation. \cmark\  indicates the method predicted the correct domain while \xmark\  indicates the method predicted the incorrect domain. The \emph{Unigram} model is easily mis-led by common words across both domains which are not domain distinctive.}
	\label{tab:disamb_evaluation_full}
\end{table*}
\subsection{Error bounds on computation of JS Divergence for \textsc{DomainSense}}
\begin{duplicate}[Lemma~\ref{lemma:js}]
Let $n_{a}(w)$ and $n_{b}(w)$ be  the total number of occurrences of a word $w$ in domains $d_{a}$ and $d_{b}$ respectively. The standard deviation in the JS divergence of the sense distribution of  $w$ across this domain pair is $\mathcal{O}(\frac{1}{\sqrt{n_{a}(w)}}+\frac{1}{\sqrt{n_{b}(w)}})$
\end{duplicate}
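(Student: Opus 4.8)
The plan is to treat the empirical sense distributions as multinomial estimators and propagate their variance through the JS divergence via a first-order (delta-method) expansion. Let the word $w$ have $S$ inferred senses. In domain $d_a$, the true sense distribution is $p = (p_1,\dots,p_S)$ and the empirical estimate $\hat p$ is obtained by counting $n_a := n_a(w)$ disambiguated occurrences, so $n_a \hat p$ is multinomial. Classical results then give $\mathrm{Var}(\hat p_s) = p_s(1-p_s)/n_a$, hence each coordinate has standard deviation $\mathcal{O}(1/\sqrt{n_a})$; similarly for $q = (q_1,\dots,q_S)$ in $d_b$ with $n_b := n_b(w)$ samples, giving coordinate-wise deviation $\mathcal{O}(1/\sqrt{n_b})$. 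I would state these as the starting facts.

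Next I would write $\mathrm{JSD}(\hat p,\hat q)$ as a smooth function $F(\hat p,\hat q)$ of the $2S$ empirical probabilities and apply the delta method. Since JS divergence is
\begin{equation}
F(p,q) = \tfrac{1}{2}\mathrm{KL}\!\left(p \,\big\|\, \tfrac{p+q}{2}\right) + \tfrac{1}{2}\mathrm{KL}\!\left(q \,\big\|\, \tfrac{p+q}{2}\right),
\end{equation}
it is continuously differentiable in the interior of the simplex, and its gradient entries are finite linear combinations of logarithms of ratios of the $p_s,q_s$. A first-order Taylor expansion about the true $(p,q)$ gives
\begin{equation}
F(\hat p,\hat q) - F(p,q) \approx \sum_{s=1}^{S} \frac{\partial F}{\partial p_s}(\hat p_s - p_s) + \sum_{s=1}^{S} \frac{\partial F}{\partial q_s}(\hat q_s - q_s).
\end{equation}
Taking the standard deviation of the right-hand side, using that the $\hat p$ fluctuations are independent of the $\hat q$ fluctuations (the two domains are sampled separately) and that each partial derivative is a bounded constant (for fixed true $p,q$ bounded away from the simplex boundary), the standard deviation of $F(\hat p,\hat q)$ is bounded by a constant times $\big(\max_s \mathrm{sd}(\hat p_s) + \max_s \mathrm{sd}(\hat q_s)\big)$, which is exactly $\mathcal{O}(1/\sqrt{n_a} + 1/\sqrt{n_b})$, the claimed rate.

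The main obstacle is justifying that the delta-method (linearization) error is genuinely lower-order, and handling the derivative's behavior: the gradient of JS divergence involves $\log$ terms that blow up as any $p_s$ or $q_s \to 0$, so the constant hidden in the $\mathcal{O}(\cdot)$ degrades for senses that are nearly unused in a domain. I would address this by assuming the true probabilities are bounded away from $0$ (or noting the analysis is for words frequent enough in both domains, which the footnote already flags via the counts $\geq 1000$), so the derivatives are uniformly bounded and the higher-order Taylor remainder is $\mathcal{O}(1/n_a + 1/n_b)$, negligible relative to the leading $\mathcal{O}(1/\sqrt{n_a}+1/\sqrt{n_b})$ term. A cleaner alternative that sidesteps the derivative bookkeeping is to bound JS divergence above by total variation (via $\mathrm{JSD} \le \mathrm{TV}$ up to constants, using Pinsker-type inequalities) and observe that the empirical TV distance concentrates around its mean at rate $\mathcal{O}(1/\sqrt{n})$; I would mention this as a fallback if the delta-method constants prove awkward to control rigorously.
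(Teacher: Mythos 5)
Your proposal follows essentially the same route as the paper's own proof: model the empirical sense probabilities as multinomial MLEs with coordinate-wise standard deviations $\sqrt{p_i(1-p_i)/n_a}$ and $\sqrt{q_i(1-q_i)/n_b}$, then propagate these through the JS divergence via a first-order expansion (the paper calls this ``propagation of uncertainty,'' which is exactly your delta method) to obtain the $\mathcal{O}(1/\sqrt{n_a(w)} + 1/\sqrt{n_b(w)})$ rate. If anything, you are more careful than the paper, which neither bounds the sensitivity coefficients (your logarithm-blowup caveat near the simplex boundary) nor justifies the higher-order remainder, and which dismisses within-domain covariances by fiat while your bound does not need that assumption.
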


\begin{proof}
Assume that a word $w$ has $S$ senses where the probability distribution over the senses in domain $d_{a}$ is given by $\textbf{p}=(p_{1}, p_{2}, \dots p_{S})$ and that in domain $d_{b}$ is given by $\textbf{q}=(q_{1}, q_{2}, \dots q_{S})$.

The JS Divergence between the probability distributions $\textbf{p}$ and $\textbf{q}$ is given by:
\begin{equation}
\begin{split}
JS(\textbf{p}, \textbf{q}) &= \frac{\sum{p_{i}\log{p_{i}}}+ \sum{q_{i}\log{q_{i}}}}{2} - \\ 
& \sum{\frac{p_{i} + q_{i}}{2}\log{\frac{(p_{i} + q_{i})}{2}}}
\end{split}
\end{equation}
Now note that each of $p_{i}$ and $q_{i}$ are sample MLE estimates of multinomial distribution. The standard deviations of each of these sample MLE estimates denoted by $\sigma_{p_{i}}$ and $\sigma_{q_{i}}$ are as follows:
\begin{equation}
\sigma_{p_{i}} = \sqrt{\frac{p_{i}^*(1-p_{i}^*)}{n_{a}(w)}}
\end{equation} 
\begin{equation}
\sigma_{q_{i}} = \sqrt{\frac{q_{i}^*(1-q_{i}^*)}{n_{b}(w)}}
\end{equation}
\end{proof}
where $p_{i}^*, q_{i}^*$ are the true values of the particular probabilities. 
In order to quantify the standard deviation in the resulting computation of $JS(\textbf{p}, \textbf{q})$ which we denote by $\sigma_{JS(\textbf{p}, \textbf{q})}$, we now apply the rule for \emph{propagation of uncertainty}\footnote{We ignore covariance terms as each parameter is estimated independently.}, which yields:

\begin{equation}
\sigma_{JS(\mathbf{p}, \mathbf{q})} = \sqrt{
	\begin{aligned}\sum{ (\frac{\partial{JS(\mathbf{p},\mathbf{q})}}{\partial{p_{i}}})^{2} \sigma_{p_{i}^{2}}} +  \\ 	
		 \sum{(\frac{\partial{JS(\mathbf{p}, \mathbf{q})}}{\partial{q_{i}}})^{2}\sigma_{q_{i}^{2}}}
\end{aligned}
}  
\label{eq:uncertainty}
\end{equation}

The coefficients $\frac{\partial{JS(\textbf{p}, \textbf{q})}}{\partial{p_{i}}}$ and $\frac{\partial{JS(\textbf{p}, \textbf{q})}}{\partial{q_{i}}}$ are called the \emph{sensitivity coefficients}. Substituting the computations for $\sigma_{p_{i}}$ and $\sigma_{q_{i}}$ in Equation \ref{eq:uncertainty} completes the proof.  This fact that the uncertainty in the JS Divergence is inversely proportional to the  square root of the sample size enables us to get reasonably accurate estimates by choosing an appropriate sample size. 

\end{document}